\documentclass{article}

\usepackage[final]{ewrl_2026}

\usepackage[utf8]{inputenc}
\usepackage[T1]{fontenc}
\usepackage{hyperref}
\usepackage{url}
\usepackage{booktabs}
\usepackage{amsfonts}
\usepackage{amsmath}
\usepackage{amsthm}
\usepackage{nicefrac}
\usepackage{microtype}
\usepackage{xcolor}
\usepackage{graphicx}
\usepackage{subfig}
\usepackage{wrapfig}
\usepackage{braket}
\usepackage{tikz}
\usetikzlibrary{positioning, arrows.meta, fit, calc, backgrounds}
\usepackage{algorithm,algpseudocode}
\usepackage{bm}
\usepackage{paralist}

\graphicspath{{../quantum_thesis/}{./}{figures/}}
\definecolor{oiBlue}{HTML}{0072B2}   \definecolor{oiOrange}{HTML}{E69F00}
\definecolor{oiGreen}{HTML}{009E73}  \definecolor{oiVermil}{HTML}{D55E00}
\definecolor{oiSky}{HTML}{56B4E9}    \definecolor{oiPurple}{HTML}{CC79A7}

\tikzset{
  >={Stealth[length=2.8mm,width=3mm]},
  font=\footnotesize,
  enc/.style   ={draw=oiBlue,   fill=oiBlue!8,   rounded corners=4pt, very thick, align=center, inner sep=4pt},
  gru/.style   ={draw=oiOrange, fill=oiOrange!15, rounded corners=4pt, very thick, align=center,
                 minimum width=30mm, minimum height=13mm},
  urnn/.style  ={draw=oiGreen,  fill=oiGreen!15,  rounded corners=4pt, very thick, align=center,
                 minimum width=30mm, minimum height=13mm},
  pol/.style   ={draw=oiVermil, fill=oiVermil!12, rounded corners=4pt, very thick, align=center,
                 font=\small, minimum width=22mm, minimum height=9mm},
  val/.style   ={draw=oiPurple, fill=oiPurple!14, rounded corners=4pt, very thick, align=center,
                 font=\small, minimum width=22mm, minimum height=9mm},
  obsframe/.style={draw=black!55, rounded corners=2pt, inner sep=1.2pt, line width=0.6pt},
  xt/.style    ={circle, draw=none, fill=oiBlue!82!black, text=white,
                 font=\bfseries\large, minimum size=10mm, inner sep=0pt},
  flow/.style  ={line width=1.3pt, draw=black!78, -{Latex[round][length=2.8mm,width=3mm]},
                 shorten >=0.4pt, shorten <=0.4pt},
  wire/.style  ={line width=1.3pt, draw=black!78},             % connector stem, no arrowhead
  loop/.style  ={-{Stealth[length=1.6mm]}, semithick, draw=black!55},
  term/.style  ={font=\normalsize, text=black!75},   % terminal output labels (math)
  lane/.style  ={rounded corners=6pt, inner sep=5mm, draw=none},
  shared/.style={rounded corners=6pt, inner sep=4mm, draw=black!25, line width=0.5pt, fill=black!4},
}
\theoremstyle{definition}
\newtheorem{definition}{Definition}[section]
\newtheorem{theorem}{Theorem}[section]

\newcommand{\figref}[1]{Figure~\ref{#1}}

\newcommand{\eqnref}[1]{Equation~(\ref{#1})}
\usepackage{paralist}

\title{Reinforcement Learning with Complex (valued) Memories}

\author{
  Sathya Kamesh Bhethanabhotla\thanks{Work done partially at the University of Freiburg.} \\
  University of Amsterdam \\
  \texttt{s.k.bhethanabhotla@uva.nl}
  \And
  Efstratios Gavves \\
  University of Amsterdam
  \And
  André Biedenkapp\footnotemark[1] \\
  Karlsruhe Institute of Technology \\
  \texttt{biedenkapp@kit.edu}
}

\begin{document}

\maketitle

% The author \thanks footnote steps the global footnote counter, and this style does not
% reset it after the title; reset here so body footnotes stay numbered from 1.
\setcounter{footnote}{0}

% ============================================================================
% ABSTRACT
% ============================================================================

% more hype, call out tasks where it imrpoves on some kind of memory tasks, and PO needs to be better motivated, and close to IRL agents. talk about how
\begin{abstract}
Partially observable environments pose a fundamental challenge in deep reinforcement learning, requiring agents to compress temporal information from observations and maintain a memory to make effective decisions. While there exist many approaches ranging from gated recurrence to attention mechanisms and model-based RL, the search for effective representational techniques that can capture long-term dependencies remains an active area of research. In this work we revisit Unitary recurrent networks (uRNNs) \citep{arjovsky2016urnn, pmlr-v70-jing17a}, that demonstrated superior gradient flow and associative recall, expressing the recurrence and the hidden state in a complex vector space. Their norm preserving unitary dynamics enable information propagation through long sequences. To this end, we propose three different versions of uRNNs as drop-in replacements for recurrent PPO architectures, and demonstrate that the simple recurrence and the added degree of freedom from the phase of the complex representations enable significant gains over baselines on several memory-improvable tasks, including continuous control. We further explore how to preserve the phase information of the complex hidden state for a \textit{phase-aware} policy by drawing a parallel to how quantum states are measured. With our methods reaching up to $2$-$3\times$ the reward in environments like rocksample and Craftax compared to the baselines, this work points towards an exciting new direction of representations for RL and the problem of partial observability. Code is available at: \url{https://github.com/Sathya98/qurl}

% feels weird - cannot connect the revisiting URNNs sentence wuth the next one

\end{abstract}

% ============================================================================
% 1. INTRODUCTION
% ============================================================================
\section{Introduction}\label{sec:introduction}

\if false
Suggested Intro structure where the "top-level" items are 1-2 paragraphs
\begin{itemize}
    \item RL is great but struggles in POMDPs
    \begin{itemize}
        \item A) Quick mention that RL is great for sequential decision making and give examples of successes
        \item B) Mention that decisions often require information of past events which leads to a POMDP setting and memory requirements for agents
    \end{itemize}
    \item Discuss how RL commonly deals with this issue
    \begin{itemize}
        \item e.g. Frame stacking and histories in general
        \item architectures such as RNNs
    \end{itemize}
    \item Point out why we believe that this is not enough.
    \begin{itemize}
        \item dealing with large histories makes learning difficult/intractable
        \item RNNs or similar networks have architectural choices which likely are not setup for the RL problem (e.g,., gating might need to be highly specialized for various situations)
    \end{itemize}
    \item What we do differently and why
    \begin{itemize}
        \item We evaluate a "memory based" architecture that, to the best of our knowledge, has not yet been explored in RL
        \item We extend this approach by observing that the complex-valued structure of the network allows us "approximate a wave function" rather than just distributions.
    \end{itemize}
    \item List of contributions
\end{itemize}
\fi

The human mind fundamentally relies on memory to function effectively in our partially observable world. When navigating through an environment, we do not have access to complete information about our surroundings at any given moment. Instead, we must integrate observations over time, maintaining an internal representation of relevant information from the past to inform our decisions and actions. Memory is what makes action possible. Without it, we could neither track a ball in motion, recall where we left our keys hours earlier nor any other task that unfolds over time. In our effort to understand and replicate the mechanics of sequential decision-making, we have turned to frameworks like reinforcement learning \citep[RL; ][]{sutton1998}, developing methods that can emulate these aspects of agency. 
%% citation with the neuro science people for memory.

Deep RL has addressed partial observability \citep{astrom1965optimal,Kaelbling1998PlanningAA} and memory in a variety of ways in the past decades of work \citep[see, e.g.][]{lambrechts-tmlr22a,ni-icml22a}. We have also seen the development of various environments that can be used to test different aspects of these problems. From early work on value-based deep learning methods like DQN \citep{mnih2015humanlevel} and the recurrent DRQN \citep{hausknecht2017deeprecurrent} playing Atari games \citep{bellemare13arcade}, over policy gradient methods like Trust Region Policy Optimization (TRPO) \citep{schulman2017trustregionpolicyoptimization} and Proximal Policy Optimization (PPO) \citep{schulman2017proximalpolicyoptimizationalgorithms}, to transformer-based methods \citep{chen2021decision} we have seen a lot of progress in the field. Further, model-based approaches such as Dreamer~\citep{hafner2025dreamerv3}, PETS~\citep{chua-neurips18a} or Recall2Imagine~\citep{samsami2024mastering} build an internal models of the world and use those to plan, enabling them to tackle partial observability. With techniques like gated recurrence \citep{hochreiter1997lstm}, attention \citep{vaswani2017attention}, and state space models \citep{gu2024mambalineartimesequencemodeling}, the search for effective representational techniques to compress state information over time remains an active area of research.
Despite this progress, the field still struggles with learning instabilities, for example caused by vanishing gradients or the dependence on short horizons.

A decade ago, Unitary Evolution Recurrent Neural Networks \citep[uRNNs;][]{arjovsky2016urnn} offered a fresh approach to addressing the challenges of memory and gradient flow in recurrent architectures. uRNNs operate in a complex-valued vector space, using complex-valued parameters and hidden states. The central insight of uRNNs is that constraining the recurrent dynamics to be \emph{unitary}, a norm-preserving transformation that ensures stable evolution of the hidden state over arbitrarily long sequences, and aids associative recall. This extension to the complex domain provides an additional degree of freedom in the form of a phase, allowing for richer representational capacity. In this work, we propose this class of recurrence as a representation paradigm for RL agents acting in partially observable tasks, requiring long term memory.
We show how different variants of this recurrent class can be straightforwardly integrated into the recurrent PPO method.
Instead of learning real-valued representations these drop-in replacements enable learning complex-valued representations.
We empirically show that such complex-valued representations already achieve a significantly better performance across a range of memory improvable tasks. %add a sentence about how craftax, a really hard game results in a massive gain in performance

While complex-valued representations already enable PPO to learn better policies, we further aim to leverage the phase information directly. We propose a novel approach  to propagate the phases captured in the complex hidden state to a discrete stochastic policy. By allowing the phases of the complex parameters to interfere constructively and destructively through a Born-rule like sampling \citep{born1984quantenmechanik}, akin to collapsing a wavefunction, we highlight a potentially deeper connection between policies and quantum states. Evaluating this \textit{Phase Aware} policy shows competitive performance to other baselines over tasks like T-maze, battleship and craftax. Particularly in long-horizon tasks, our approach outperforms canonical recurrent architectures. Our contributions are: %of this paper are summarized as follows:

%The mathematical foundations underlying unitary transformations extend beyond machine learning into the realm of quantum information theory and quantum mechanics. In quantum mechanics, states are expressed in complex Hilbert spaces, and their evolution through time, governed by solutions to the Schr\"{o}dinger equation, is unitary, preserving both norm and information. The parallel between domains suggests a deeper connection that may offer new insights into the applicability of these representations in reinforcement learning.

%In this work, we explore the unification of these concepts by evaluating the efficacy of uRNNs to address memory and partial observability in reinforcement learning. We hypothesize that the complex vector space, combined with unitary transformations can provide a rich representation space for learning effective policies in partially observable environments.

\begin{enumerate}[I)]
    \item We propose a recurrent PPO approach that uses the uRNN  as the recurrent cell;
    \item We extend the uRNN architecture to enable an input-dependent unitary transformation matrix;
    \item We propose a way to interpret the policy from a quantum information perspective, and introduce a completely complex-valued policy head with a sampling method inspired by the Born rule in quantum mechanics;
    \item We evaluate the proposed methods on a range of partially observable tasks, and show that the uRNN architecture is capable of solving these tasks and outperforms the baselines.
\end{enumerate}
%The results of our analysis show that complex-valued policies and hidden-state representations are a highly promising alternative to real-valued approaches.

% \textbf{Paper outline.} Section~\ref{sec:background} provides the necessary background on reinforcement learning, partial observability, complex algebra, unitary matrices, and uRNNs. Section~\ref{sec:related} reviews related work on memory benchmarks, model-based reinforcement learning, and complex-valued neural networks. Section~\ref{sec:methodology} describes our methodology for integrating uRNNs into PPO, the input-conditional formulation, and the quantum-inspired policy head. Section~\ref{sec:experiments} presents our experimental evaluation and results. Finally, Section~\ref{sec:disc_conc} discusses the implications of our findings, limitations, and directions for future research.

% ============================================================================
% 2. BACKGROUND
% ============================================================================
\section{Background}\label{sec:background}

% This section provides the reader with the necessary background to understand the main approach. We briefly cover Reinforcement Learning and PPO, followed by partial observability and memory, the mathematical foundations of complex algebra and unitary matrices, and finally Unitary Evolution Recurrent Neural Networks.

\subsection{Partial Observability, Memory and the problem setting}\label{sec:pomem}

In many real-world applications, the state is only partially observable, formalized as a Partially Observable Markov Decision Process (POMDP) \citep{astrom1965optimal}. A POMDP augments the standard Markov Decision Process with an observation model and is described by the tuple $(\mathcal{S}, \mathcal{A}, T, R, \Omega, O, \gamma)$, where $\mathcal{S}$ is the set of states and $\mathcal{A}$ the set of actions;
$T(s_{t+1} \mid s_t, a_t)$ is the state transition distribution;
$R(s_t, a_t)$ is the reward function;
$\Omega$ is the set of observations and $O(o_t \mid s_t)$ the observation model;
$\gamma \in [0,1)$ is the discount factor.

Unlike the fully observable case, the agent never accesses the underlying state $s_t$ directly; at each step it receives only an observation $o_t \in \Omega$ emitted from $s_t$ through $O$. A single observation is in general non-Markovian, in that it does not on its own summarize the information needed to act optimally, so a policy $\pi(a_t \mid o_t)$ conditioned on the latest observation alone is insufficient. To recover the optimal policy, the agent must instead estimate the latent state from the history of observations seen so far, $o_{1:t} = (o_1, o_2, \dots, o_t)$, compressing this history into a representation sufficient for action selection. This introduces the need for agents to maintain memory of past observations, and it is the partially observable setting we operate in throughout this work.

The simplest approach is frame stacking \citep{bellemare13arcade}, also employed by \citet{mnih2015humanlevel} in DQN for Atari, where the last four frames were stacked as input. However, this captures dependencies only within the fixed window. Recurrent networks address this by compressing temporal information into a learned hidden state. DRQN \citep{hausknecht2017deeprecurrent} replaced frame stacking with LSTM layers that process single frames while maintaining a hidden state, matching DQN's performance while being more robust to partial observability.

Recurrent architectures still face limitations: sequential processing prevents parallelization and fixed-size hidden states bottleneck information flow. GTrXL \citep{parisotto2020stabilizing} stabilized Transformers for RL through gating mechanisms. Decision Transformer \citep{chen2021decision} reframed RL as sequence modeling by conditioning on desired returns.

\subsection{Unitary Evolution Recurrent Neural Networks}\label{sec:urnns}

Recurrent neural networks suffer from well-known stability issues: when the recurrent weight matrix has eigenvalues with magnitude not equal to one, the hidden state dynamics either contract or blow up exponentially, leading to vanishing and exploding gradients \citep{bengio1994learning}. Gated architectures such as LSTMs \citep{hochreiter1997lstm} mitigate this through learned gating mechanisms, but still lack an inductive bias to bound the gradients.

The introduction of \emph{Unitary Evolution Recurrent Neural Networks} (uRNNs) by \citet{arjovsky2016urnn} directly targeted this gap. The central insight is that if the recurrent transition is constrained to be \emph{unitary}, then the hidden state evolves through a norm-preserving transformation at every time step. A unitary matrix $U \in \mathbb{C}^{n \times n}$, satisfying $U U^\dagger = U^\dagger U = I$, is the complex analog of an orthogonal matrix $Q \in \mathbb{R}^{n \times n}$ (for which $Q Q^\top = I$). Refer to \ref{sec:compalg} for an involved introduction.

\subparagraph{Unitary Recurrent Dynamics.}
A uRNN maintains a hidden state $h_t \in \mathbb{C}^n$ that evolves as:
\begin{equation}
h_{t+1} = \sigma(U h_t + V_x x_t),
\label{eq:urnn_recurrence}
\end{equation}
where $U \in \mathbb{C}^{n \times n}$ is a unitary matrix, $V_x$ maps real-valued inputs into the complex hidden space, and $\sigma$ is the modReLU activation:
\begin{equation}
\sigma_{\text{modReLU}}(z) = \begin{cases}
(|z| + b) \frac{z}{|z|} & \text{if } |z| + b \geq 0, \\
0 & \text{if } |z| + b < 0.
\end{cases}
\label{eq:modrelu}
\end{equation}

\subparagraph{Efficient Parameterization.}
The unitary matrix is decomposed into structured factors:
\begin{equation}
U = D_3 \, R_2 \, \mathcal{F}^{-1} \, D_2 \, \Pi \, R_1 \, \mathcal{F} \, D_1,
\label{eq:urnn_parameterization}
\end{equation}
where $D_1, D_2, D_3$ are diagonal phase matrices, $\mathcal{F}$ is the DFT, $R_1, R_2$ are Householder reflections, and $\Pi$ is a fixed random permutation. Each component is unitary by construction, giving $O(n \log n)$ complexity. Gradient updates are performed on unrestricted parameters, guaranteeing unitarity without projection.
uRNNs demonstrated remarkable performance on associative recall tasks, outperforming LSTMs with a fraction of the parameters. This makes the architecture appealing for RL, where memory-related tasks require integrating information over long sequences.

% ============================================================================
% 3. RELATED WORK
% ============================================================================
\section{Related Work}\label{sec:related}

\paragraph{Benchmarking Memory and Reinforcement Learning}

Works such as \citet{hausknecht2017deeprecurrent, parisotto2020stabilizing, chen2021decision} presented methods relying on recurrence and transformers to handle partial observability. Notably, \citet{ni2023when} showed that transformers can be effective in capturing memory but are unable to assign credit correctly to past actions. Memory Gym \citep{pleines2023memory} and POPGym \citep{morad2023popgym} provide benchmark suites of partially observable environments requiring memory. RLBenchNet \citep{smirnov2025rlbenchnetrightnetworkright} presents an evaluation suite of PPO agents across various architectures.

In our work, we use the POBAX benchmark \citep{tao2025pobax} for our experiments, baselines and comparison. POBAX a benchmark for a suite of partially observable and memory improvable tasks written in JAX \citep{jax2018github}, including tasks ranging from vector input continuous control to complicated image based tasks like Crafter \citep{hafner2021crafter}. They compare the performance of a GRU based recurrent PPO and a Lambda Discrepancy PPO \citep{allenkirtlandtao2024lambdadiscrep} method against a memoryless and a full state baseline.

\paragraph{Model-Based Reinforcement Learning}
Model-Based RL relies on learning a world model \citep{ha2018worldmodels} that can represent the state space of partially observable environments. These models can learn the dynamics of the environment in terms of this compressed latent state $p(s_{t+1}| s_t, a_t)$, enabling planning and off policy RL methods like SAC \citep{haarnoja2017soft}, and TD-MPC2 \citep{hansen2024tdmpc2}. Most notable are PlaNet and Dreamer \citep{hafner2019planet, hafner2025dreamerv3}, employing RSSM-based GRU cells. Recall2Imagine \citep{samsami2024mastering} proposed a world model based on S4 \citep{gu2021combining} for long-term memory. Interestingly, S4 and Mamba \citep{gu2024mambalineartimesequencemodeling} are based on the same kind of recurrence as the uRNN, differing only in the applied non-linearity.

\paragraph{Complex-Valued Neural Networks}
Neural networks with complex parameters have been explored for many years. \citet{amin2011wirtinger} showed how Wirtinger calculus enables gradient descent for complex-valued networks. Follow-up works \citep{trabelsi2018deepcomplex, sarroff2015learningrep, geuchen2023optimal} demonstrated capabilities across domains. For a comprehensive overview, see \citet{bassey2021surveycomplexvaluedneuralnetworks}.

\paragraph{Successors of uRNNs}
Several works addressed the limitation that the original uRNN's parameterization does not span the entire unitary group. \citet{wisdom2016fullcapacity} proposed full-capacity unitary RNNs on Stiefel manifolds. \citet{mhammedi2017efficient} parameterized real orthogonal matrices via Householder reflections (oRNN). \citet{helfrich2018orthogonal} introduced scoRNN via the scaled Cayley transform. \citet{jing2017goru} extended unitary RNNs with gating in GORU, addressing the lack of an explicit forget mechanism. Later, the LRU \citep{lruorvieto23a} was introduced as a linear RNN with complex hidden states, which was the extended in \citet{elelimy2024realtime} for real-time RL. We use the original uRNN architecture as a starting point for exploring unitary transformations in RL. 

In this work, we particularly concentrate on the method presented in \citep{pmlr-v70-jing17a}, a tuneable version of the uRNN composed of unitary transforms on alternate dimensions, that can span the whole unitary space for a given dimensionality. We explore this construction as an additional method for complex-valued recurrence, owing to its simple design and tuneable expressivity.

% ============================================================================
% 4. METHODOLOGY
% ============================================================================
\section{Unitary Recurrent Networks for PPO}\label{sec:methodology}

%\subsection{Motivation and Overview}

We equip a PPO agent with a unitary recurrent cell, hypothesizing that a complex hidden state evolved by norm-preserving transformations provides a stable and expressive memory for partially observable control. Complex data types, forward and backward passes are fully supported in JAX and PyTorch, which we rely on, for building the method (see Appendix \ref{sec:cvnns}).

\subsection{Integrating the uRNN into Proximal Policy Optimization}\label{sec:ppo_urnn}

We build on the recurrent PPO implementation of POBax \citep{tao2025pobax}, where the base recurrent cell is a GRU \citep{cho-gru}. A task-specific encoder maps each observation $o_t$ to a feature vector $x_t$, which drives a recurrent cell whose hidden state $h_t$ feeds separate policy and value heads.

\begin{figure}[t]
    \centering
    \resizebox{0.85\textwidth}{!}{\input{figures/arch_comparison.tex}}
    \caption{\textbf{Unitary Recurrent PPO architectures.} The Unitary Recurrent cell can be simply dropped-in for any standard recurrent cell. We show how the different variants can be constructed with complex hidden states, and later, fully complex policies.}
    \label{fig:ppo_urnn_comparison}
\end{figure}

We replace this GRU with a unitary recurrent cell carrying a complex hidden state $h_t \in \mathbb{C}^{n}$, where $n$ matches the GRU baseline's per-environment hidden size (Table~\ref{tab:env_settings}), leaving the rest of the agent unchanged (\figref{fig:ppo_urnn_comparison}). We consider two parameterizations of the unitary transition $U$. In its original form \citep{arjovsky2016urnn}, the recurrence follows \eqnref{eq:urnn_recurrence} with $U$ structured as in \eqnref{eq:urnn_parameterization}: $U$ is constant across time and the input enters only through $V_x$. Native \texttt{complex64} support allows a much simpler implementation than the original; the components are:
\begin{compactitem}
    \item {$D_1, D_2, D_3$:} Diagonal matrices parameterized with learnable phases: $D_{j,j} = e^{i w_{j}}, \; w_j \in \mathbb{R}$.
    \item {$R_1, R_2$:} Reflection matrices with learnable $v_1, v_2 \in \mathbb{C}^{n}$: $R_i = I - 2 v_k v_k^\ast/\|v_k\|^2$.
    \item {$V_x$:} A \texttt{complex64} linear layer mapping the feature vector to $\mathbb{C}^n$ (no bias).
    \item {modReLU:} Implemented as per \eqnref{eq:modrelu}.
\end{compactitem}

The policy and value networks receive $2n$ input features (real and imaginary parts of the hidden state). This construction, due to its time/step independent recurrent dynamics, takes up far fewer parameters than a GRU cell, even though the complex parameters use twice the floating point space.

\paragraph{Parameter Initialization.} We follow the original initialization \citep{arjovsky2016urnn}: $D$ phases from $\mathcal{U}(-\pi, \pi)$, reflection vectors from $\mathcal{U}(-1, 1)$ for both components, $V_x$ via Glorot initialization \citep{glorot10a}, and modReLU bias initialized to 0.

\paragraph{Hidden State Initialization.} We propose initializing the hidden state with a constant complex vector such that $\|h_0\|_2 = 1$:
\begin{equation}
    h_0[k] = \frac{1}{\sqrt{2n}} \left(1 + i\right) \quad k \in \{1, 2, \ldots, n\}
\end{equation}
This is equivalent to an equal superposition state in an $n$-dimensional eigenbasis (Appendix ~\ref{app:qm}), pointing to an interesting interpretation explored in Section~\ref{sec:qm_policy_update}.

\subsection{Tunable Unitary Recurrence with EUNNs}\label{sec:eunn}
% rewrite this quickly for a more intuitive understanding
The original parameterization (\eqnref{eq:urnn_parameterization}) is efficient but spans only a subset of the unitary group \citep{wisdom2016fullcapacity}. As a second method we adopt the Efficient Unitary Neural Network (EUNN) of \citet{pmlr-v70-jing17a}, replacing the fixed FFT structure with a product of $L$ layers whose expressivity is directly tunable:
\begin{equation}
    U_{\text{EUNN}} = D \, F_L F_{L-1} \cdots F_1,
    \label{eq:eunn}
\end{equation}
where $D$ is a diagonal phase and each $F_k$ is block-diagonal in $H/2$ independent $2\times 2$ unitary rotations, each acting on a pair of hidden dimensions $(a,b)$:
\begin{equation}
    \begin{pmatrix} h'_a \\ h'_b \end{pmatrix} =
    \begin{pmatrix} e^{i\phi}\cos\theta & -e^{i\phi}\sin\theta \\[2pt] \sin\theta & \cos\theta \end{pmatrix}
    \begin{pmatrix} h_a \\ h_b \end{pmatrix},
    \label{eq:eunn_block}
\end{equation}
with per-pair angles $\theta, \phi$. Consecutive layers alternate which dimensions are paired via a cyclic shift, mixing information across the full state. The capacity $L$ tunes expressivity: small $L$ restricts $U$ to a structured subspace, while $L=H$ provably spans the full unitary group $U(H)$. The recurrence is again \eqnref{eq:urnn_recurrence} with $U=U_{\text{EUNN}}$, using learnable, input-independent angles.

\subsection{Conditioning the Unitary Transformation on the Observation}\label{sec:conditioning_urnn}

While a constant $U$ ensures stable gradient flow and is parameter-efficient, we hypothesize that an observation-dependent $U(x_t)$ can improve representational capacity. This mirrors the selectivity of modern state-space models, where the transition is made input-dependent \citep{katharopoulos2020lin, gu2024mambalineartimesequencemodeling, yang2024gdn, yang2024deltanet, siems2025deltaproduct}.

We propose the ICuRNN (Input Conditional Unitary RNN) that conditions $U$ on the input $x_t$:
\begin{compactitem}
    \item {$D_1, D_2, D_3$:} Now parameterized with a real-valued linear layer: $D_j = \text{Linear}(x_t)$.
    \item {$R_1, R_2$:} Parameterized with a \texttt{complex64} linear layer: $v_k = \text{ComplexLinear}(x_t)$.
\end{compactitem}
The remaining components ($V_x$, hidden state initialization) are unchanged. While this adds more parameters, we hypothesize that it allows the policy to learn what to remember faster.

% \paragraph{Summary of changes.} Relative to a standard recurrent PPO agent, we (i) swap the GRU for a complex-valued unitary cell, in either the original \citep{arjovsky2016urnn} or tunable EUNN \citep{pmlr-v70-jing17a} parameterization; (ii) initialize the hidden state as an equal-superposition state; (iii) optionally condition the transition on the observation (ICuRNN); and (iv) train the complex and real parameter groups with separate learning rates ($8\times10^{-5}$ and $2.5\times10^{-4}$). The encoder, heads, and PPO objective are otherwise unchanged.

% ============================================================================
% 5. EXPERIMENTS
% ============================================================================
\section{Do uRNNs improve memory capabilities?}\label{sec:experiments}

\subsection{Environments}\label{sec:environments_tasks}

We evaluate on seven tasks from the POBax suite \citep{tao2025pobax}, each partially observable but solvable with the right memory, spanning discrete recall, noisy state tracking, spatial mapping, and continuous control:

\textbf{T-Maze.} The agent receives a cue at the start of a corridor indicating which direction to turn at the far end, but the cue is absent from the observation thereafter. It must carry a single bit across the entire corridor, isolating long-horizon recall. We use a corridor of length 75 (\texttt{tmaze\_75}) to make it a true test of long term memory capability, compared to the original benchmark's 10.

\textbf{RockSample.} A rover on an $n \times n$ grid must sample good rocks and avoid bad ones, using a noisy sensor whose accuracy decays with distance. Solving it requires integrating repeated noisy checks to track the latent quality of each rock. We evaluate the $(11,11)$ and $(15,15)$ grid size configurations.

\textbf{Battleship.} The agent fires on a $10 \times 10$ board with hidden ships and observes only a hit or miss each step, without past outcomes retained in the observation. It must build and maintain an internal map of where it has already fired and what has been hit.

\textbf{Masked Walker and HalfCheetah.} Continuous-control locomotion tasks (Brax) in which only velocity features are observed and all positional state is masked. The agent must integrate the velocity history to recover position, testing memory in a high-dimensional continuous control setting.

\textbf{Craftax (No Inventory).} A pixel-based open-world survival task originally proposed in \citet{hafner2021crafter} and made harder by \citet{matthews2024craftax}. The version we run is a futher modified version by \citet{tao2025pobax}, in which the inventory panel is cropped from the observation. The agent must remember its collected resources and progress through the achievement tree from visual input alone, requiring higher memory capabilities than the original Craftax task. We downsample the observation image size, just as described in the POBAX benchmark. %quote appendix

\subsection{Experimental Setup}\label{sec:exp_setup}

Our implementations of the unitary recurrent networks are built on top of the POBAX benchmark \citep{tao2025pobax}, written entirely in JAX.
We compare against two baselines from POBAX: the GRU recurrent PPO agent, and PPO-LD, which augments recurrent PPO with the $\lambda$-discrepancy memory objective based on a second value function approximator \citep{allenkirtlandtao2024lambdadiscrep}. All architectural details apart from the RNN, including last action concatenation are kept here. Each uRNN variant is a drop-in replacement of the GRU cell, leaving the encoder, heads, and PPO objective untouched. We evaluate the three unitary parameterizations introduced in Section~\ref{sec:methodology}:
\begin{compactenum}
    \item \textbf{uRNN}: original parameterization with a constant transition $U$ (Section~\ref{sec:ppo_urnn}).
    \item \textbf{EUNN}: tunable parameterization with capacity $L = 2$ (Section~\ref{sec:eunn}).
    \item \textbf{ICuRNN}: input-conditioned transition $U(x_t)$ (Section~\ref{sec:conditioning_urnn}).
\end{compactenum}

All uRNN variants use a complex hidden state matching the GRU baseline's hidden size, and use the same encoder, policy and value heads. For the EUNN variant, we only evaluate the simplest tuning setting, with $L=2$, as an initial demonstration given the original work \citep{jing2017tunablenetworks} shows that this setting is already comparable with traditional recurrent cells. We leave the investigation of the effects of tuning to future work.
All evaluations are done for the same total steps as in the POBAX benchmark, averaged over 5 seeds, except for craftax, where we limit the runs to 100M steps.

\paragraph{Hyperparameters} We evaluate the baselines GRU and PPO-LD on the same tuned hyperparameter settings in POBAX. We observe that the complex valued parameters in our methods require a much smaller and separate learning rate than the real valued ones. We keep the network sizes, hidden state sizes, discount factor and number of parallel environments identical to the baselines, but find that a larger number of steps (256) per PPO update works well for the uRNNs. The optimal GAE lambda, learning rates and entropy coefficient were found through sweeps over a grid, and the details per environment can be found in App. \ref{app:hyperparameters}. We do not tune the settings for the No Inventory Craftax environment, given its large number of parallel environments and compute constraints.

\begin{figure}[t]
    \centering
    \includegraphics[width=0.95\textwidth]{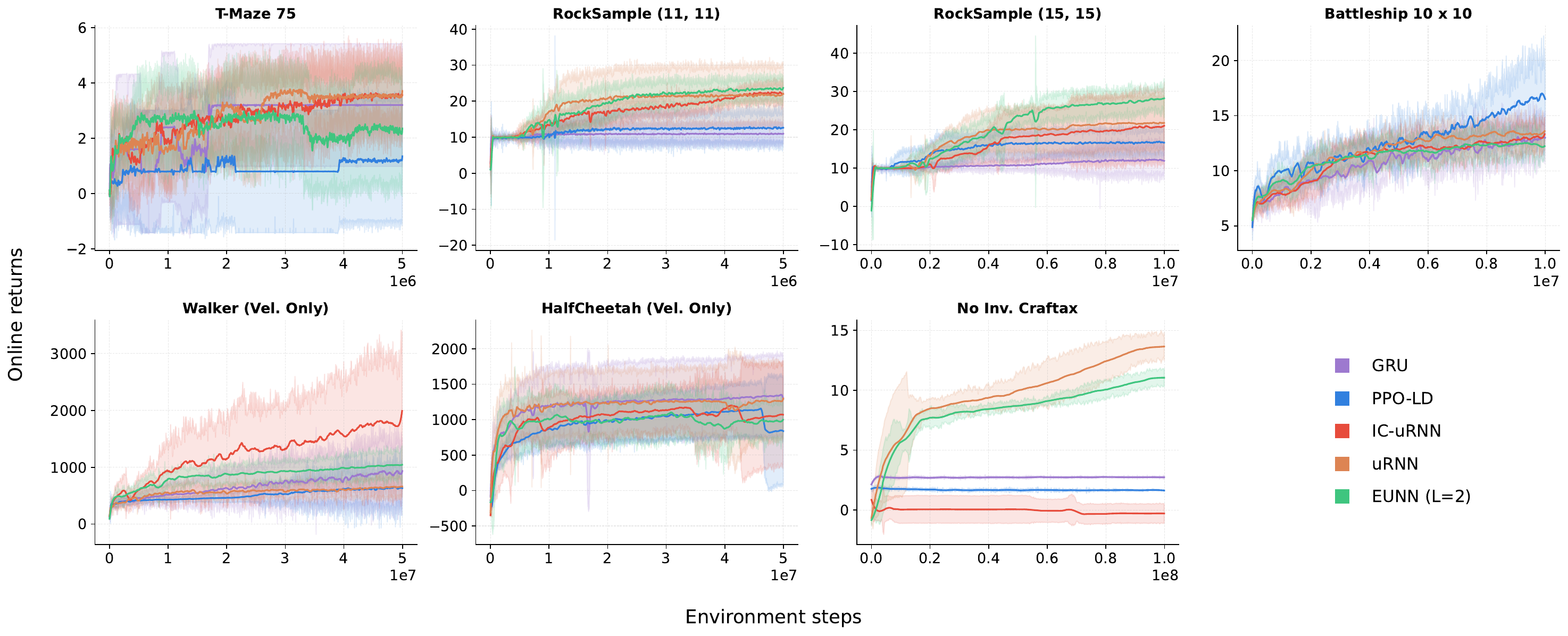}
    \caption{Reward curves for the 3 uRNN variants and the baselines across 7 environments from the POBAX \citep{tao2025pobax} benchmark}
    \label{fig:reward_plot_base}
\end{figure}

\subsection{Results}
The results of the evaluations across the 7 tasks on the 3 unitary recurrent variants are visualized in Fig. \ref{fig:reward_plot_base}. Across the suite, the three unitary variants collectively outperform both the GRU and PPO-LD baselines on a majority of tasks, frequently by a wide margin, and remain competitive on the rest.

The clearest gains appear on the memory-heavy discrete tasks. On both RockSample configurations all three uRNN variants pull well above the baselines, which plateau near a return of $12$: on the $(11,11)$ grid the original uRNN reaches the highest returns, roughly $2\times$ the GRU and PPO-LD plateaus, with EUNN and IC-uRNN following, while on the larger $(15,15)$ grid EUNN fares the best. The largest gains are on No-Inventory Craftax run with downsampled observations, where uRNN and EUNN far exceed the two GRU baselines\footnote{We follow the same observation cropping and downsampling as described in the POBAX \citep{tao2025pobax} paper for all our evaluations but were unable to replicate their results on the Craftax environment}. IC-uRNN is the exception here, failing to make progress on this task despite being the strongest variant elsewhere. We attribute this case to insufficient tuning, given compute budget constraints.

The continuous-control and remaining tasks are more even. On the velocity-only Walker, IC-uRNN stands out with a substantial lead over all other methods, suggesting that conditioning the unitary transition on the observation is particularly useful for integrating velocity history, whereas on HalfCheetah all methods perform comparably. On T-Maze-75, the memory-capable all the methods converge to similar returns, comfortably above the PPO-LD baseline, with EUNN learning fastest early in training. Overall, the complex-recurrent cells deliver consistent and often large improvements on memory-intensive tasks, with different parameterizations excelling on different environments.

% ============================================================================
% 6. PHASE AWARE POLICIES AND QUANTUM INFORMATION 
% ============================================================================

\section{Phase Aware Policies: A Quantum Information Perspective}\label{sec:qm_policy_update}

In the preceding sections we demonstrated that complex-valued representations provide a richer source for learning policies in POMDP settings.
However, we believe that na\"ively treating uRNN cells as a simple drop-in replacement for canonical recurrent units leaves untapped potential in the expressivity of the complex-valued representations.
That is, our approach so far has treated both the real and imaginary parts of the hidden state as real-valued inputs to the policy head. However, a complex vector encodes more than just that - the different dimensions of the vector have different phases and these get lost in the conversion detailed above.
This leads us to ask the question: \textit{Can the phases of the hidden state encode relevant information towards the actions an optimal policy should take at each step?} In the quest of instilling "phase awareness" into our policies, we draw a parallel to concepts from Quantum Information Theory.

Quantum states are primarily represented vectors in a complex eigen basis, that is, they can be expressed as a linear combination of a set of orthogonal eigenvectors. A 'measurement' in this basis, \textit{collapses} the state onto one of the eigenvectors, with a probability determined by the Born rule \citep{born1984quantenmechanik}. A introduction to these concepts and notation used below is included in Appendix \ref{app:qm}. 

%We hypothesize that this numerical mismatch looses valuable information that is encoded in the phase.
%The improved performance seen so far, especially in long horizon memory tasks, further provides first evidence that the inclusion of the phase might serve as a specific memory mechanism.
%This motivates our choice to consider modifications to the policy head such that it is phase-aware and preserves the complex representation through until the final action sampling mechanism.
%This design decision lets us draw an interesting parallel to Quantum Information Theory, in which complex-valued vector spaces are commonly used to represent a quantum-systems state and evolution.
%Particularly relevant to us is that phases in such spaces can interfere constructively and destructively as waves.
%\andr{I don't like this sentence but I don't want to spent too much time refining it now. We can maybe even drop this particular part.}
%Such a property might be highly relevant for memory intensive RL tasks and could potentially allow us to apply RL in real-world scenarios where the markov-property might not always hold.
%\andr{Rough draft end.}

This probabilistic nature of measurement, so fundamental to physical quantities and phenomena, sparks a comparison to the policy we are learning here. Consider a stochastic policy $\pi(a_t \mid s_t)$ over a \textit{discrete} action space $\mathcal{A}$. This policy at any time step $t$ can be expressed as a quantum state vector:
\begin{equation}
    \ket{\pi_t} = \sum_{a \in \mathcal{A}} c^a_t \ket{a}, \quad c^a_t \in \mathbb{C}
\end{equation}

Drawing on the Born rule (Appendix ~\ref{app:qm}), the probability of observing action $a$ is:
\begin{equation}
    \pi(a_t \mid s_t) =  |\braket{a | \pi_t}|^2 = |c^a_t|^2
\end{equation}

Given the complex hidden state $h_t \in \mathbb{C}^n$ from the uRNN, we propose learning the action coefficients directly:
\begin{equation}
    \ket{\pi(s_t)} = \sum_{a \in \mathcal{A}} \hat{c}^a_\theta(h_t) \ket{a}, \quad \hat{c}^a_\theta(h_t) \in \mathbb{C}, \; h_t \in \mathbb{C}^n
\end{equation}

To this end, we replace the real-valued policy head with a complex-valued one with the ModReLU activation. The action coefficients $\hat{c}^a_\theta(h_t)$ are unnormalized. To apply the Born rule while maintaining compatibility with softmax, we output $\log( | \hat{c}^a_\theta(h_t) |^2 )$ as logits:
\begin{equation}
    \pi(a_t \mid s_t) =\frac{\exp (log(| \hat{c}^a_\theta(h_t) |^2))}{\sum_{a \in \mathcal{A}} \exp(log(| \hat{c}^a_\theta(h_t) |^2))} = \frac{| \hat{c}^a_\theta(h_t) |^2}{\sum_{a \in \mathcal{A}} | \hat{c}^a_\theta(h_t) |^2}
\end{equation}

\paragraph{Interference in fully complex policies}
Because the action coefficients are a phase-preserving transformations (ModReLU preserves phase) of the hidden state, for action $a$:
\begin{equation}
    \log \left( | \hat{c}^a_\theta(h_t) |^2 \right) = \log \left( | w^a[0] h_t[0] + w^a[1] h_t[1] + \cdots + w^a[n-1] h_t[n-1] |^2\right)
\end{equation}
where $w^a \in \mathbb{C}^n$ is the weight vector for action $a$. Since both weights and hidden states are complex, the phases interfere constructively or destructively (\eqnref{eq:complex_norm_sum}). Because the hidden state contains information from past time steps and the current observation, this interplay allows for potentially a much richer representation space for the policy.

\paragraph{Unitarity} Furthermore, the evolution of quantum states through time, are also constrained by Unitary transformations. While our recurrent dynamics are not exactly unitary, owing to the addition of the observation dependent vector, this connection between concepts of reinforcement learning and quantum information theory is noteworthy.

\subsection{Experiments: Phase Aware Policy uRNNs}\label{sec:experiment_2}

Following the methodology described above, we integrate the fully complex policy head with the Born rule based sampling into the 3 previously specified uRNN variants, to have: QuRNN, QICuRNN and QEUNN. As we described the method for discrete action spaces, we limit our evaluation of the QuRNN variants to exclude the Walker and HalfCheetah environments. All other settings and hyperparameters (Appendix \ref{app:hyperparameters}) stay the same as Section \ref{sec:exp_setup}.

\begin{figure}[t]
    \centering
    \includegraphics[width=0.85\textwidth]{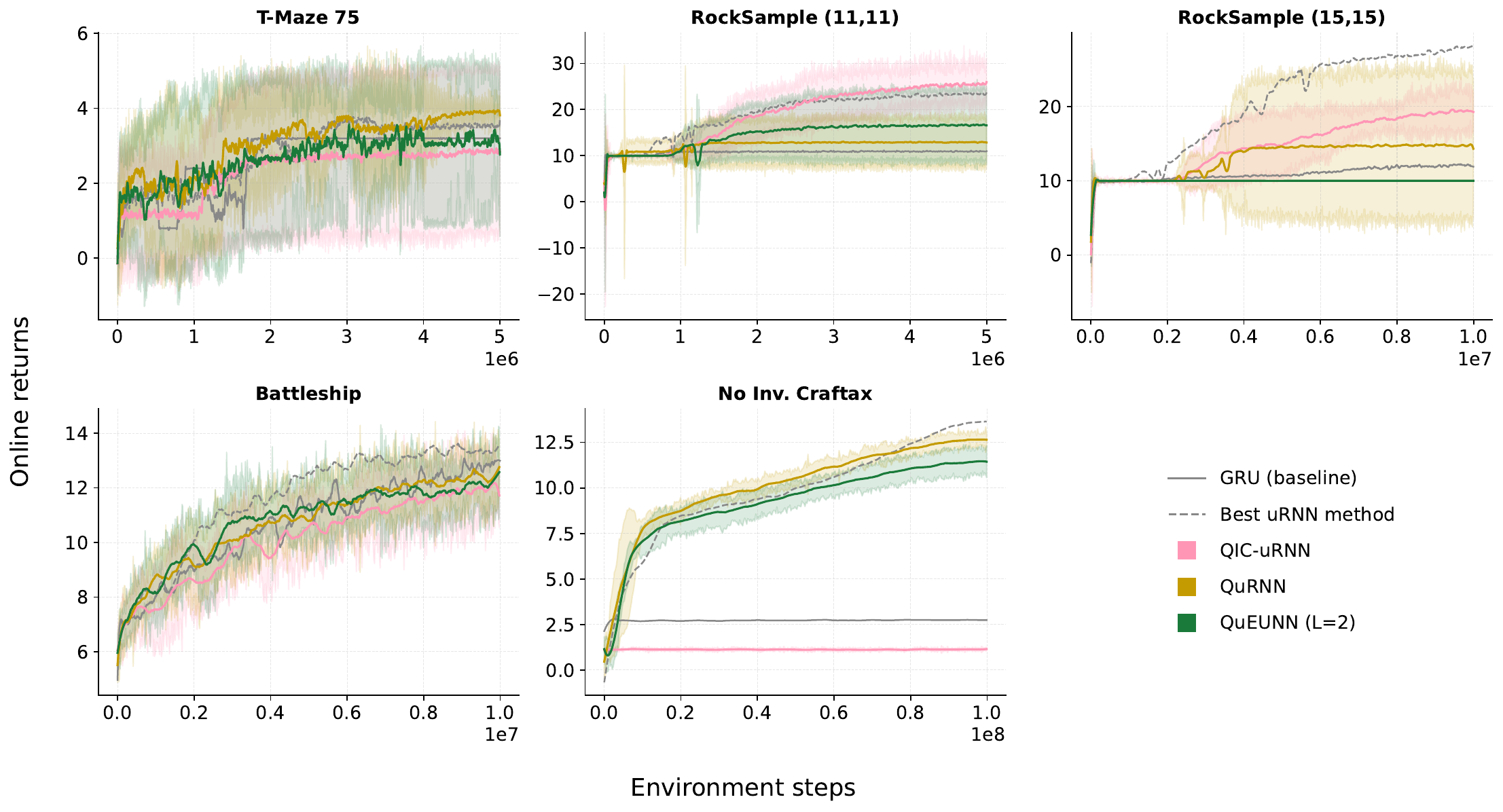}
    \caption{Reward curves for the 3 Phase Aware QuRNN variants plotted with a GRU baseline and the best uRNN method on 5 environments with discrete action spaces}
    \label{fig:reward_plot_born}
\end{figure}

\paragraph{Results.} The reward curves for the three phase-aware variants are shown in \figref{fig:reward_plot_born}, alongside the GRU baseline and the best base uRNN method from Section~\ref{sec:experiments} for clarity. Overall, the phase-aware policies are competitive with their base uRNN counterparts on most tasks. The performance is  strongest again on No-Inventory Craftax, where QuRNN and QuEUNN reach close to $5\times$ the GRU return and track the best base uRNN, and on RockSample, where QIC-uRNN slightly exceeds the best base uRNN variant. On T-Maze-75 and Battleship evironments, all methods end up tightly clustered around the GRU baseline. The exception mirrors the base results: QIC-uRNN fails to learn on Craftax, just as IC-uRNN did, while remaining one of the stronger variants on RockSample. These results indicate that preserving the phase through to action sampling retains the benefits of the complex-valued recurrence without sacrificing performance relative to the real-valued policy heads.

% ============================================================================
% 7. DISCUSSION AND CONCLUSION
% ============================================================================
\section{Discussion and Conclusion}\label{sec:disc_conc}

% I guess we dont need this section actually
% \textbf{Summary of Contributions and Results.} We revisited the uRNN architecture, showed how to integrate it into PPO, introduced an input-conditioned formulation (ICuRNN), and proposed a quantum-inspired complex-valued policy head with Born rule sampling. Our evaluations yield the following findings:

% \begin{itemize}
%     \item[Q1.] The uRNN architecture succeeds at memory tasks, with IC-uRNN-PPO being the most sample efficient on S7. Both uRNN variants solve S13 on some seeds while the LSTM baseline does not. On Atari Breakout, uRNN variants narrowly beat the LSTM.
%     \item[Q2.] The Q-uRNN-PPO and QIC-uRNN-PPO variants with fully complex policy heads are the only methods that completely solve Memory S13, demonstrating the benefits of complex-valued policies with phase interference.
%     \item[Q3.] Hidden state norm analysis shows consistent patterns of information accumulation, supporting the argument for unitary recurrence.
% \end{itemize}

% here would be the additions I guess

In this work, we revisited unitary evolution recurrent neural networks as a representation for reinforcement learning in partially observable environments. We integrated their complex-valued, norm-preserving recurrence into recurrent PPO as a drop-in replacement for the GRU cell, and studied three parameterizations of the unitary transition. We then proposed a phase-aware policy head that samples actions through a Born-rule mechanism, preserving the complex representation through to action selection.
The strong performance of our methods across the POBAX benchmark are evidence that complex-valued representations and unitary dynamics are a promising new direction for representation learning in RL, rather than a niche architectural choice. The connection to quantum information theory further hints at a framework bridging the two fields, as previously explored in a different context by \citet{dong2008quantumreinforcementlearning}. We believe these findings motivate a deeper investigation of complex-valued states for memory and partial observability in RL

\textbf{Limitations.} We do not explore the hyperparameter space in depth. Given the known sensitivity of PPO to hyperparameters \citep{pmlr-v202-eimer23a, shengyi2022the37implementation}, more thorough exploration is needed. Our experiments are limited to part of the POBAX benchmark; evaluation on benchmarks like Memory Gym \citep{pleines2023memory} and Memory Maze \citep{pasukonis2022memmaze} would strengthen the argument. While we demonstrate the performance of the base version of the EUNN architecture \citep{jing2017tunablenetworks}, we only evaluate the simplest tuning setting as a preliminary set of experiments, and we hope to understand the impact of the tuning parameter, and the performance of the method when accessing larger unitary spaces in the future.

\textbf{Future Work.} Looking to the future, we hope to extend this work in two main directions. The dual representation of the complex hidden state naturally motivates an exciting question: \textit{what do the phases of the hidden states encode throughout the rollout?} We hope to investigate this mechanistically in upcoming work.
A natural extension is to apply complex-valued representations and unitary transformations to model-based RL, where the norm-preserving property becomes central to state transitions in learned world models. More diverse environments and deeper analysis of training dynamics are also important directions.

% ============================================================================
% ACKNOWLEDGMENTS (hidden in anonymous submission)
% ============================================================================
\begin{ack}
André Biedenkapp is funded by the Deutsche Forschungsgemeinschaft (DFG, German
Research Foundation) – 572775489.
The authors acknowledge support by the state of Baden-Württemberg through bwHPC and the German Research Foundation (DFG) through grant INST 35/1597-1 FUGG. Efstratios Gavves and Sathya Kamesh Bhethanabhotla are supported by the European Union’s Horizon Europe research and innovation programme under grant agreement number 101214398 (ELLIOT).
\end{ack}

% ============================================================================
% REFERENCES
% ============================================================================
\bibliographystyle{unsrtnat}
\bibliography{lib}

% ============================================================================
% APPENDIX
% ============================================================================
\appendix

\section{Complex Forward and Backward Passes}\label{sec:cvnns}

When uRNNs were first proposed, PyTorch \citep{paszke2019pytorch} and TensorFlow \citep{tensorflow2015-whitepaper} did not offer reliable complex data types, so researchers had to write custom layers for initialization, normalization, and backpropagation. Works like \citep{trabelsi2018deepcomplex} still existed to describe the mechanics of complex-valued neural networks, defined activations, layers and normalization techniques around them, while demonstrating their applicability in various domains. \par

Today the situation is much simpler. Both PyTorch and JAX \citep{jax2018github} include built-in complex dtypes (for example \texttt{torch.complex64} and \texttt{jax.numpy.complex64}) and their autograd systems support Wirtinger calculus automatically \citep{amin2011wirtinger}. This means a complex forward or backward pass can be written just like its real-valued counterpart: declare parameters in a complex dtype and the framework handles gradient propagation. \par
In this work, we rely heavily on these frameworks for a much simpler and straightforward implementation of complex-valued neural networks.

\section{Complex Algebra and Unitary Matrices}\label{sec:compalg}

An important idea in this work is the extension of linear algebra to the complex space. Complex numbers and complex-valued representations provide an additional degree of freedom in the form of a phase.

A complex number $z = x + iy$ can also be represented as $z = r e^{i\theta}$ with $r = |z|$ and $\theta = \tan^{-1}(y/x)$. The product of two complex numbers can be seen as rotations in phase space: $z_1 z_2 = r_1 r_2 e^{i(\theta_1 + \theta_2)}$.

An important interaction is the squared norm of the sum of two complex numbers:
\begin{equation}
| z_1 + z_2 |^2 = |z_1|^2 + |z_2|^2 + 2 |z_1||z_2| \cos(\theta_2 - \theta_1)
\label{eq:complex_norm_sum}
\end{equation}
This shows that the squared norm depends on the phase difference, a property we will exploit later.

A complex vector space $\mathbb{C}^n$ is a Hilbert space where the inner product takes the form $\langle a, b \rangle = a^\ast b$, with $(\cdot)^\ast$ denoting conjugate transpose.

Now for matrices in a Hilbert space, the unitary property is defined as follows: 
\begin{definition}
A matrix $U \in \mathbb{C}^{n \times n}$ is \emph{unitary} if $U^\dagger U = I$, where $U^\dagger$ denotes the conjugate transpose. For any vector $x \in \mathbb{C}^n$: $\|Ux\|_2 = \|x\|_2$.
\end{definition}
All eigenvalues of a unitary matrix lie on the complex unit circle ($|\lambda| = 1$).

\paragraph{Householder Reflections.}\label{par:householder}
A Householder reflection (Fig. \ref{fig:Householder}), characterized by a matrix in complex space is $H = I - 2\,vv^\ast/(v^\ast v)$, $v \in \mathbb{C}^n$. All Householder matrices are unitary (see Appendix~\ref{app:householder_unitarity}), and composing multiple reflections generates a rich family of unitary transformations efficiently. This idea forms the core of how we can parameterize unitary recurrent systems.

\section{Proof for the unitarity of Householder Matrices}\label{app:householder_unitarity}

\begin{figure}[t]
    \begin{center}
    \begin{tikzpicture}[x={(20:10mm)}, y={(100:15mm)}, z={(5:10mm)}, >=stealth]
        % Draw the hyperplane (plane orthogonal to v) - made smaller
        \fill[black!10] (-1.5,0,-0.75) -- (-1.5,0,0.75) --  (1.5,0,0.75) -- (1.5,0,-0.75) -- cycle;
        
        % Define coordinates
        \coordinate (o) at (0,0,0);
        \coordinate (a) at (0,1,0);
        \coordinate (b) at (1,0,0);
        \coordinate (c) at (1,-1,0);
        \coordinate (x) at (1,1,0);
        \coordinate (v) at (0,0.5,0);  % Made v bigger
        
        % Draw vector v (normal to the hyperplane) - made bigger
        \draw[very thick, ->] (o) -- (v) node[midway, left] {$v$};
        
        % Draw projection of x onto v: (v^H x)v
        \draw[thick, ->, gray] (o) -- (a) node[black, left] {$(v^* x)v$};
        
        % Draw component perpendicular to v: x - (v^H x)v
        \draw[thick, ->, green] (o) -- (b) node[black, right] {$x-(v^* x)v$};
        
        % Draw original vector x
        \draw[thick, ->, red] (o) -- (x) node[black, above right] {$x$};
        
        % Draw dashed line from x to the perpendicular component
        \draw[gray] (a) -- (x);
        
        % Draw first dashed arrow showing -(v^H x)v
        \draw[thick, ->, gray, dashed] (x) -- (b) node[black, midway, right, font=\scriptsize] {$-(v^* x)v$};
        
        % Draw second dashed arrow showing -(v^H x)v again
        \draw[thick, ->, gray, dashed] (b) -- (c) node[black, midway, right, font=\scriptsize] {$-(v^* x)v$};
        
        % Draw reflected vector Hx = (I - 2vv^H)x
        \draw[thick, ->, blue] (o) -- (c) node[black, below] {$(I-2vv^*)x$};
        
    \end{tikzpicture}
    \end{center}
    \caption[Geometric interpretation of a Householder reflection]{\textbf{Geometric interpretation of a Householder reflection.} A vector $x$ is reflected across the hyperplane orthogonal to $v$ to produce $Hx$, where $H = I - 2vv^\ast/(v^\ast v)$ is the Householder matrix. The reflection flips the parallel component while preserving the perpendicular component.}
    \label{fig:Householder}
\end{figure}

In \ref{par:householder}, we introduced the Householder reflection as a way to parametrize unitary matrices. We now provide the proof for the unitarity of Householder matrices.

\begin{theorem}
Let $v \in \mathbb{C}^n$ be a nonzero vector. The Householder matrix defined by
\[
H = I - \frac{2vv^\dagger}{v^\dagger v}
\]
is unitary, i.e., $H^\dagger H = I$.
\end{theorem}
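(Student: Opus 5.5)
The plan is a direct algebraic verification: compute $H^\dagger$, show $H$ is Hermitian, and then expand $H^\dagger H = H^2$. Throughout, write $P = vv^\dagger/(v^\dagger v)$, so that $H = I - 2P$. This substitution is well defined because $v \neq 0$. In that case $v^\dagger v = \|v\|_2^2$ is a strictly positive real number, and in particular it is its own complex conjugate.

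\textbf{Step 1 (Hermitian).} First I would show $H^\dagger = H$. Since $(vv^\dagger)^\dagger = (v^\dagger)^\dagger v^\dagger = vv^\dagger$, and the scalar $1/(v^\dagger v)$ is real, we get $P^\dagger = P$. Hence $H^\dagger = I - 2P^\dagger = H$.

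\textbf{Step 2 (idempotent projector).} Next I would check $P^2 = P$ by regrouping with associativity:
\[
P^2 = \frac{v(v^\dagger v)v^\dagger}{(v^\dagger v)^2} = \frac{vv^\dagger}{v^\dagger v} = P .
\]
The inner product $v^\dagger v$ is a scalar, so it factors out of the middle of the product.

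\textbf{Step 3 (conclude).} Combining the two steps gives
\[
H^\dagger H = H^2 = I - 4P + 4P^2 = I - 4P + 4P = I,
\]
which is the claim.

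As a remark consistent with the figure, $H$ acts as $-1$ on $\mathrm{span}(v)$ and as $+1$ on $v^\perp$. This offers an alternative proof: $H$ maps an orthonormal basis adapted to the decomposition $\mathrm{span}(v) \oplus v^\perp$ to an orthonormal basis, and is therefore unitary.

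There is no real obstacle here. The only points needing care are keeping the conjugate transpose straight in Step 1, i.e. using $(vv^\dagger)^\dagger = vv^\dagger$ rather than an ordinary transpose, and noting that the scalar $v^\dagger v$ is real and nonzero. The latter is exactly where the hypothesis $v \neq 0$ is used.
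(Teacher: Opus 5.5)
Your proof is correct and matches the paper's argument: you first establish $H^\dagger = H$ using that $v^\dagger v$ is real and positive, then expand $H^\dagger H = H^2$ and collapse the middle factor via $vv^\dagger vv^\dagger = (v^\dagger v)\,vv^\dagger$. Writing this as the idempotence $P^2 = P$ of $P = vv^\dagger/(v^\dagger v)$ only changes the notation of the paper's computation, not the argument.
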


\begin{proof}
We verify that $H^\dagger H = I$ by direct computation.

First, we compute the Hermitian conjugate of $H$:
\begin{align*}
H^\dagger &= \left(I - \frac{2vv^*}{v^* v}\right)^\dagger \\
&= I^\dagger - \frac{2}{v^* v}\left(vv^*\right)^\dagger \\
&= I - \frac{2}{(v^* v)^*}\left(v^*\right)^* v^* \\
&= I - \frac{2}{v^* v}vv^* \\
&= H,
\end{align*}
where we used the fact that $v^* v$ is real and positive (hence $(v^* v)^* = v^* v$), and $\left(v^*\right)^* = v$.

Thus, $H$ is Hermitian. Now we verify unitarity:
\begin{align*}
H^\dagger H &= H^2 \\
&= \left(I - \frac{2vv^*}{v^* v}\right)\left(I - \frac{2vv^*}{v^* v}\right) \\
&= I - \frac{2vv^*}{v^* v} - \frac{2vv^*}{v^* v} + \frac{4vv^* vv^*}{(v^* v)^2} \\
&= I - \frac{4vv^*}{v^* v} + \frac{4v(v^* v)v^*}{(v^* v)^2} \\
&= I - \frac{4vv^*}{v^* v} + \frac{4vv^*}{v^* v} \\
&= I,
\end{align*}
where in the second-to-last line we used the fact that $v^* v$ is a scalar, so $v(v^* v) = (v^* v)v$.

Therefore, $H^\dagger H = I$, which proves that $H$ is unitary.
\end{proof}

\section{Quantum Information Theory and Measurements}\label{app:qm}

This section serves as a brief introduction to the concepts of quantum information theory and measurements, which we will use to motivate a alternative perspective from which this work and the underlying Unitary Recurrence can be viewed, in Section~\ref{sec:qm_policy_update}. While this section only covers the concepts and relationships relevant to this work, a more comprehensive treatment of the complex linear algebra and quantum mechanical principles can be found in \citep{Nielsen_Chuang_2010}. \par
Throughout this section, we use Dirac notation: a ket $\ket{\psi}$ denotes a column vector, while a bra $\bra{\phi}$ denotes a row vector. A braket $\braket{\phi, \psi}$ denotes the inner product $\bra \phi\ket{\psi}$, and a ketbra $\ket{\psi}\bra{\phi}$ denotes the outer product.\par \hspace{1em}

A quantum state provides a complete description of an isolated physical system.
\begin{definition}
    In a finite-dimensional Hilbert space $\mathcal{H} \cong \mathbb{C}^d$, any pure state is represented as a unit vector
    \[
    \ket{\psi} = \sum_{k=1}^d c_k \ket{k},
    \]
    where $\{\ket{k}\}$ is an orthonormal basis and the coefficients $c_k \in \mathbb{C}$ satisfy $\sum_k |c_k|^2 = 1$.
\end{definition}
The definition of basis is the same as in real vector spaces, where any set of orthogonal vectors that span the space can be a basis.
This state vector expansion is not merely a coordinate representation: it encodes all physically accessible information about the system. \par

A state vector written in this form as a linear combination of the basis eigenstates, is said to be in a \emph{superposition} of the eigenstates. The coefficients $c_k$ are the \emph{amplitudes} of the state vector in the basis eigenstates, and if all the amplitudes $c_k$ are equal, then this superposition is a \emph{equal superposition state}, meaning the state vector is equally likely to be in any of the basis eigenstates.

\subsection{Operators and State Transformations}

In quantum mechanics, transformations of quantum states are represented by \emph{Operators}, which are square matrices acting on the Hilbert space. An operator $A \in \mathbb{C}^{d \times d}$ transforms a state $\ket{\psi}$ to $A\ket{\psi}$, analogous to how a matrix multiplies a vector in linear algebra. Operators can represent various physical processes: rotations, measurements, or time evolution.

For a measurement to be physically meaningful, the corresponding operator must be \emph{Hermitian} (as defined in \ref{sec:compalg}). Hermitian operators have real eigenvalues and form a complete orthonormal eigenbasis, making them suitable for representing physical observables, or quantities that can be measured, such as energy or position. When we measure an observable represented by a Hermitian operator $M$, we are effectively projecting the state onto one of $M$'s eigenstates.

\subsection{Measurement and Probabilistic Outcomes}

A measurement in the eigenbasis $\{\ket{k}\}$ of a Hermitian operator is inherently probabilistic. Upon measurement, the outcome $\lambda_k$ (corresponding to eigenstate $\ket{k}$) is observed with probability
\[
p(k) = |\braket{k, \psi}|^2 = |c_k|^2,
\]
as prescribed by the Born rule, and the post-measurement state collapses to $\ket{k}$. The stochasticity arises not from noise but from the fundamental quantum mechanical property of projecting the state onto an eigenbasis of the measured observable. This projection is a linear operation that transforms the state from a superposition $\ket{\psi} = \sum_k c_k \ket{k}$ to a single basis state $\ket{k}$.

\subsection{Time Evolution and Unitary Dynamics}

The time evolution of a closed quantum system is governed by the Schrödinger equation, which is a linear differential equation that describes how the state of a quantum system changes over time.
\[
i \hbar \frac{d}{dt} \ket{\psi} = H \ket{\psi},
\]
where $H$ is the Hamiltonian, a Hermitian operator encoding the system's energy structure.  
Its formal solution is
\[
\ket{\psi(t)} = U(t)\ket{\psi(0)}, \qquad U(t) = e^{-\frac{i}{\hbar}Ht}.
\]
where $U(t)$ is the time evolution operator. Because $H$ is Hermitian, the operator $U(t)$ is unitary, ensuring that the norm of the state is preserved during evolution.  
This norm preservation mirrors the dynamical behaviour of unitary recurrent updates used in machine learning: the global “energy” of the representation remains constant, and information is carried forward through rotations in complex space rather than amplification or decay.

\section{Hyperparameters}\label{app:hyperparameters}

We organize the hyperparameters into three tables. Table~\ref{tab:shared_hparams} lists the PPO settings that are held fixed across the two baselines (GRU, PPO-LD) and all six uRNN variants. Table~\ref{tab:env_settings} lists the per-environment settings that are fixed across all methods but vary by environment (hidden size, parallel environments, training budget, and discount). Table~\ref{tab:baseline_hparams} reports the per-environment best hyperparameters of the two baselines, found by sweeping over a grid.

% ---------------------------------------------------------------------------
\begin{table}[h]
\centering
\caption{Hyperparameters held fixed across all baselines and all six uRNN variants. The rollout length $T$ is the only PPO setting we change relative to the baselines (see note).}
\label{tab:shared_hparams}
\begin{tabular}{lcl}
\toprule
\textbf{Hyperparameter} & \textbf{Value} & \textbf{Note} \\
\midrule
Optimizer & Adam & \\
Update epochs & 4 & per PPO update \\
Minibatches per update & 4 & \\
PPO clip $\epsilon$ & 0.2 & \\
Value loss coefficient $c_v$ & 0.5 & \\
Max gradient norm & 0.5 & global-norm clipping \\
Learning-rate anneal & linear $\to 0$ & applied to the real parameter group \\
Action concatenation & yes & previous action $a_{t-1}$ appended to $o_t$ \\
Rollout length $T$ & 128 / 256 & 128 for baselines, 256 for uRNN variants; \\
                    &           & Craftax uses 64 \\
Seeds & 5 \\
\bottomrule
\end{tabular}
\end{table}

% ---------------------------------------------------------------------------
\begin{table}[h]
\centering
\caption{Per-environment settings, fixed across all methods. The complex hidden state of each uRNN variant has the same dimension $n$ as the GRU baseline's hidden state. Training budgets follow the POBAX benchmark \citep{tao2025pobax}; Craftax is capped at $10^8$ steps.}
\label{tab:env_settings}
\begin{tabular}{lcccc}
\toprule
\textbf{Environment} & \textbf{Hidden size $n$} & \textbf{Parallel envs} & \textbf{Total steps} & \textbf{$\gamma$} \\
\midrule
T-Maze (corridor 75)      & 32  & 4   & $5.0\times10^{6}$ & 0.99 \\
RockSample $(11,11)$      & 256 & 8   & $5.0\times10^{6}$ & 0.99 \\
RockSample $(15,15)$      & 512 & 16  & $1.0\times10^{7}$ & 0.999 \\
Battleship $(10\times10)$ & 512 & 32  & $1.0\times10^{7}$ & 1.0 \\
Masked Walker             & 256 & 4   & $5.0\times10^{7}$ & 0.99 \\
Masked HalfCheetah        & 256 & 4   & $5.0\times10^{7}$ & 0.99 \\
Craftax (no inventory)    & 512 & 256 & $1.0\times10^{8}$ & 0.99 \\
\bottomrule
\end{tabular}
\end{table}

% ---------------------------------------------------------------------------
\begin{table}[h]
\centering
\caption{Per-environment best hyperparameters for the baselines, selected by a grid sweep, exactly following \citep{tao2025pobax}}
\label{tab:baseline_hparams}
\begin{tabular}{lccc cccccc}
\toprule
 & \multicolumn{3}{c}{\textbf{GRU-PPO}} & \multicolumn{5}{c}{\textbf{PPO-LD}} \\
\cmidrule(lr){2-4}\cmidrule(lr){5-9}
\textbf{Environment} & lr & ent. & $\lambda_0$ & lr & ent. & $\lambda_0$ & $\lambda_1$ & $\beta$ \\
\midrule
T-Maze (corridor 75)      & $2.5\!\times\!10^{-3}$ & 0.01 & 0.7  & $2.5\!\times\!10^{-4}$ & 0.01 & 0.95 & 0.95 & 0.25 \\
RockSample $(11,11)$      & $2.5\!\times\!10^{-3}$ & 0.2  & 0.7  & $2.5\!\times\!10^{-3}$ & 0.2  & 0.5  & 0.5  & 0.25 \\
RockSample $(15,15)$      & $2.5\!\times\!10^{-3}$ & 0.2  & 0.7  & $2.5\!\times\!10^{-3}$ & 0.2  & 0.1  & 0.95 & 0.5 \\
Battleship $(10\times10)$ & $2.5\!\times\!10^{-3}$ & 0.05 & 0.7  & $2.5\!\times\!10^{-3}$ & 0.05 & 0.1  & 0.95 & 0.5 \\
Masked Walker             & $2.5\!\times\!10^{-4}$ & 0.01 & 0.95 & $2.5\!\times\!10^{-4}$ & 0.01 & 0.95 & 0.95 & 0.5 \\
Masked HalfCheetah        & $2.5\!\times\!10^{-4}$ & 0.01 & 0.9  & $2.5\!\times\!10^{-5}$ & 0.01 & 0.95 & 0.7  & 0.25 \\
Craftax (no inventory)    & $2.5\!\times\!10^{-4}$ & 0.01 & 0.5  & $2.5\!\times\!10^{-4}$ & 0.01 & 0.1  & 0.95 & 0.25 \\
\bottomrule
\end{tabular}
\end{table}

\subsection{Per-environment sweeps for the uRNN variants}\label{app:our_hparams}

We tune four hyperparameters per environment for our six variants: the real and complex learning rates (lr, complex lr), the entropy coefficient, and the GAE coefficient $\lambda_0$. All remaining settings are shared with the baselines (Tables~\ref{tab:shared_hparams} and~\ref{tab:env_settings}); the second GAE coefficient $\lambda_1=0.5$ and the $\lambda$-discrepancy weight are not used ($\beta=0$, single critic). For each environment we report the search space (left) and the best setting per variant (right). For \textbf{Craftax} we do not tune the uRNN variants: we reuse the baseline settings of Table~\ref{tab:baseline_hparams}.

% ---------------------------------------------------------------------------
\begin{table}[h]
\centering
\caption{\textbf{T-Maze (corridor 75).} Search space (left) and best per-variant settings (right).}
\label{tab:sweep_tmaze}
\begin{minipage}[t]{0.42\textwidth}\centering\small
\begin{tabular}{l l}
\toprule
\textbf{Parameter} & \textbf{Values swept} \\
\midrule
complex lr & $10^{-6},5\!\times\!10^{-6},10^{-5},$ \\
           & $5\!\times\!10^{-5},10^{-4}$ \\
entropy    & $0.005,\,0.01,\,0.05$ \\
$\lambda_0$ & $0.8,\,0.9,\,0.95$ \\
lr         & $2.5\!\times\!10^{-4},\,2.5\!\times\!10^{-3}$ \\
\bottomrule
\end{tabular}
\end{minipage}\hfill
\begin{minipage}[t]{0.56\textwidth}\centering\small
\begin{tabular}{l cccc}
\toprule
\textbf{Variant} & complex lr & ent. & $\lambda_0$ & lr \\
\midrule
uRNN     & $10^{-6}$ & 0.005 & 0.95 & $2.5\!\times\!10^{-4}$ \\
EUNN     & $5\!\times\!10^{-5}$ & 0.01 & 0.8 & $2.5\!\times\!10^{-4}$ \\
ICuRNN   & $5\!\times\!10^{-5}$ & 0.01 & 0.95 & $2.5\!\times\!10^{-4}$ \\
QuRNN    & $10^{-5}$ & 0.005 & 0.95 & $2.5\!\times\!10^{-3}$ \\
QEUNN    & $10^{-4}$ & 0.005 & 0.8 & $2.5\!\times\!10^{-4}$ \\
QICuRNN  & $5\!\times\!10^{-6}$ & 0.01 & 0.9 & $2.5\!\times\!10^{-3}$ \\
\bottomrule
\end{tabular}
\end{minipage}
\end{table}

% ---------------------------------------------------------------------------
\begin{table}[h]
\centering
\caption{\textbf{RockSample $(11,11)$.} Search space (left) and best per-variant settings (right).}
\label{tab:sweep_rs11}
\begin{minipage}[t]{0.42\textwidth}\centering\small
\begin{tabular}{l l}
\toprule
\textbf{Parameter} & \textbf{Values swept} \\
\midrule
complex lr & $10^{-6},\,10^{-5},\,8\!\times\!10^{-5}$ \\
entropy    & $0.075,\,0.1,\,0.15,\,0.2$ \\
$\lambda_0$ & $0.7,\,0.8,\,0.95$ \\
lr         & $2.5\!\times\!10^{-4},\,2.5\!\times\!10^{-3}$ \\
\bottomrule
\end{tabular}
\end{minipage}\hfill
\begin{minipage}[t]{0.56\textwidth}\centering\small
\begin{tabular}{l cccc}
\toprule
\textbf{Variant} & complex lr & ent. & $\lambda_0$ & lr \\
\midrule
uRNN     & $8\!\times\!10^{-5}$ & 0.075 & 0.7 & $2.5\!\times\!10^{-3}$ \\
EUNN     & $10^{-6}$ & 0.15 & 0.7 & $2.5\!\times\!10^{-3}$ \\
ICuRNN   & $10^{-6}$ & 0.075 & 0.7 & $2.5\!\times\!10^{-3}$ \\
QuRNN    & $8\!\times\!10^{-5}$ & 0.075 & 0.7 & $2.5\!\times\!10^{-3}$ \\
QEUNN    & $8\!\times\!10^{-5}$ & 0.1 & 0.7 & $2.5\!\times\!10^{-3}$ \\
QICuRNN  & $8\!\times\!10^{-5}$ & 0.15 & 0.7 & $2.5\!\times\!10^{-4}$ \\
\bottomrule
\end{tabular}
\end{minipage}
\end{table}

% ---------------------------------------------------------------------------
\begin{table}[h]
\centering
\caption{\textbf{RockSample $(15,15)$.} Search space (left) and best per-variant settings (right).}
\label{tab:sweep_rs15}
\begin{minipage}[t]{0.42\textwidth}\centering\small
\begin{tabular}{l l}
\toprule
\textbf{Parameter} & \textbf{Values swept} \\
\midrule
complex lr & $10^{-6},\,10^{-5},\,8\!\times\!10^{-5}$ \\
entropy    & $0.075,\,0.1,\,0.15,\,0.2$ \\
$\lambda_0$ & $0.7,\,0.8,\,0.95$ \\
lr         & $2.5\!\times\!10^{-4},\,2.5\!\times\!10^{-3}$ \\
\bottomrule
\end{tabular}
\end{minipage}\hfill
\begin{minipage}[t]{0.56\textwidth}\centering\small
\begin{tabular}{l cccc}
\toprule
\textbf{Variant} & complex lr & ent. & $\lambda_0$ & lr \\
\midrule
uRNN     & $10^{-5}$ & 0.075 & 0.7 & $2.5\!\times\!10^{-3}$ \\
EUNN     & $10^{-6}$ & 0.1 & 0.7 & $2.5\!\times\!10^{-3}$ \\
ICuRNN   & $8\!\times\!10^{-5}$ & 0.15 & 0.7 & $2.5\!\times\!10^{-4}$ \\
QuRNN    & $8\!\times\!10^{-5}$ & 0.1 & 0.7 & $2.5\!\times\!10^{-3}$ \\
QEUNN    & $10^{-6}$ & 0.075 & 0.8 & $2.5\!\times\!10^{-3}$ \\
QICuRNN  & $8\!\times\!10^{-5}$ & 0.2 & 0.7 & $2.5\!\times\!10^{-4}$ \\
\bottomrule
\end{tabular}
\end{minipage}
\end{table}
% ---------------------------------------------------------------------------
\begin{table}[h]
\centering
\caption{\textbf{Battleship $(10\times10)$.} Search space (left) and best per-variant settings (right).}
\label{tab:sweep_battleship}
\begin{minipage}[t]{0.42\textwidth}\centering\small
\begin{tabular}{l l}
\toprule
\textbf{Parameter} & \textbf{Values swept} \\
\midrule
complex lr & $10^{-6},\,10^{-5},\,8\!\times\!10^{-5}$ \\
entropy    & $0.01,\,0.05,\,0.1$ \\
$\lambda_0$ & $0.6,\,0.7,\,0.8,\,0.9,\,0.95$ \\
lr         & $2.5\!\times\!10^{-3}$ \\
\bottomrule
\end{tabular}
\end{minipage}\hfill
\begin{minipage}[t]{0.56\textwidth}\centering\small
\begin{tabular}{l cccc}
\toprule
\textbf{Variant} & complex lr & ent. & $\lambda_0$ & lr \\
\midrule
uRNN     & $8\!\times\!10^{-5}$ & 0.01 & 0.9 & $2.5\!\times\!10^{-3}$ \\
EUNN     & $10^{-6}$ & 0.01 & 0.7 & $2.5\!\times\!10^{-3}$ \\
ICuRNN   & $8\!\times\!10^{-5}$ & 0.01 & 0.95 & $2.5\!\times\!10^{-3}$ \\
QuRNN    & $10^{-5}$ & 0.01 & 0.8 & $2.5\!\times\!10^{-3}$ \\
QEUNN    & $10^{-5}$ & 0.01 & 0.8 & $2.5\!\times\!10^{-3}$ \\
QICuRNN  & $10^{-5}$ & 0.01 & 0.9 & $2.5\!\times\!10^{-3}$ \\
\bottomrule
\end{tabular}
\end{minipage}
\end{table}

% ---------------------------------------------------------------------------
\begin{table}[h]
\centering
\caption{\textbf{Masked Walker.} Search space (left) and best per-variant settings (right). The Born-rule (Q) variants are defined for discrete action spaces only and are not evaluated here.}
\label{tab:sweep_walker}
\begin{minipage}[t]{0.42\textwidth}\centering\small
\begin{tabular}{l l}
\toprule
\textbf{Parameter} & \textbf{Values swept} \\
\midrule
complex lr & $10^{-6},\,10^{-5},\,8\!\times\!10^{-5}$ \\
entropy    & $0.005,\,0.01,\,0.05,\,0.1$ \\
$\lambda_0$ & $0.8,\,0.9,\,0.95$ \\
lr         & $2.5\!\times\!10^{-4}$ \\
\bottomrule
\end{tabular}
\end{minipage}\hfill
\begin{minipage}[t]{0.56\textwidth}\centering\small
\begin{tabular}{l cccc}
\toprule
\textbf{Variant} & complex lr & ent. & $\lambda_0$ & lr \\
\midrule
uRNN     & $8\!\times\!10^{-5}$ & 0.01 & 0.95 & $2.5\!\times\!10^{-4}$ \\
EUNN     & $10^{-5}$ & 0.01 & 0.95 & $2.5\!\times\!10^{-4}$ \\
ICuRNN   & $8\!\times\!10^{-5}$ & 0.005 & 0.95 & $2.5\!\times\!10^{-4}$ \\
\bottomrule
\end{tabular}
\end{minipage}
\end{table}

% ---------------------------------------------------------------------------
\begin{table}[h]
\centering
\caption{\textbf{Masked HalfCheetah.} Search space (left) and best per-variant settings (right). The Born-rule (Q) variants are defined for discrete action spaces only and are not evaluated here.}
\label{tab:sweep_halfcheetah}
\begin{minipage}[t]{0.42\textwidth}\centering\small
\begin{tabular}{l l}
\toprule
\textbf{Parameter} & \textbf{Values swept} \\
\midrule
complex lr & $10^{-6},\,10^{-5},\,8\!\times\!10^{-5}$ \\
entropy    & $0.005,\,0.01,\,0.05,\,0.1$ \\
$\lambda_0$ & $0.8,\,0.9,\,0.95$ \\
lr         & $2.5\!\times\!10^{-4}$ \\
\bottomrule
\end{tabular}
\end{minipage}\hfill
\begin{minipage}[t]{0.56\textwidth}\centering\small
\begin{tabular}{l cccc}
\toprule
\textbf{Variant} & complex lr & ent. & $\lambda_0$ & lr \\
\midrule
uRNN     & $8\!\times\!10^{-5}$ & 0.005 & 0.9 & $2.5\!\times\!10^{-4}$ \\
EUNN     & $10^{-6}$ & 0.005 & 0.9 & $2.5\!\times\!10^{-4}$ \\
ICuRNN   & $8\!\times\!10^{-5}$ & 0.01 & 0.8 & $2.5\!\times\!10^{-4}$ \\
\bottomrule
\end{tabular}
\end{minipage}
\end{table}

\end{document}